\newcommand{\GG}[1]{}
\icmltitlerunning{Equivariant Graph Neural Networks for 3D Macromolecular Structure}
\newcommand{\vvv}{\mathbf{v}}
\newcommand\norm[1]{{\left\lVert#1\right\rVert}_2} 
\def\eqref#1{equation~\ref{#1}}
\def\1{\bm{1}}
\def\eps{{\epsilon}}
\DeclareMathAlphabet{\mathsfit}{\encodingdefault}{\sfdefault}{m}{sl}
\SetMathAlphabet{\mathsfit}{bold}{\encodingdefault}{\sfdefault}{bx}{n}
\begin{document}

\twocolumn[
\icmltitle{Equivariant Graph Neural Networks for 3D Macromolecular Structure}




\begin{icmlauthorlist}
\icmlauthor{Bowen Jing}{cs}
\icmlauthor{Stephan Eismann}{cs,ap}
\icmlauthor{Pratham N. Soni}{cs}
\icmlauthor{Ron O. Dror}{cs}
\end{icmlauthorlist}

\icmlaffiliation{cs}{Department of Computer Science, Stanford University, USA}
\icmlaffiliation{ap}{Department of Applied Physics, Stanford University, USA}

\icmlcorrespondingauthor{Bowen Jing}{bjing@cs.stanford.edu}
\icmlcorrespondingauthor{Stephan Eismann}{seismann@cs.stanford.edu}
\icmlcorrespondingauthor{Ron O. Dror}{rondror@cs.stanford.edu}

\icmlkeywords{equivariance, graph neural network, molecular structure}

\vskip 0.3in
]



\printAffiliationsAndNotice{}
\newcommand{\hide}[1]{}
\begin{abstract}

Representing and reasoning about 3D structures of macromolecules is emerging as a distinct challenge in machine learning. Here, we extend recent work on \emph{geometric vector perceptrons} and apply equivariant graph neural networks to a wide range of tasks from structural biology. Our method outperforms all reference architectures on three out of eight tasks in the ATOM3D benchmark, is tied for first on two others, and is competitive with equivariant networks using higher-order representations and spherical harmonic convolutions. In addition, we demonstrate that transfer learning can further improve performance on certain downstream tasks. Code is available at \url{https://github.com/drorlab/gvp-pytorch}.
\end{abstract}

\section{Introduction}

Learning on 3D structures of macromolecules (such as proteins) is a rapidly growing area of machine learning with promising applications but also domain-specific challenges. In particular, methods should possess an efficient and precise representation of structures with thousands of atoms and faithfully reason about their 3D geometry independent of orientation and position \citep{laine2021protein}. 

\emph{Equivariant neural networks} (ENNs), operating on point-cloud representations of structures with geometric vector/tensor features and spherical harmonic convolutions, address these challenges. These networks have shown promising results on small molecule datasets \cite{thomas2018tensor, kondor2018n, anderson2019cormorant, fuchs2020se3, batzner2021se} and recently also in the context of learning from macromolecular structure  \citep{eismann2020one,eismann2020two,townshend2020atom3d}.

\begin{figure}
    \centering
    \includegraphics[width=0.7\linewidth]{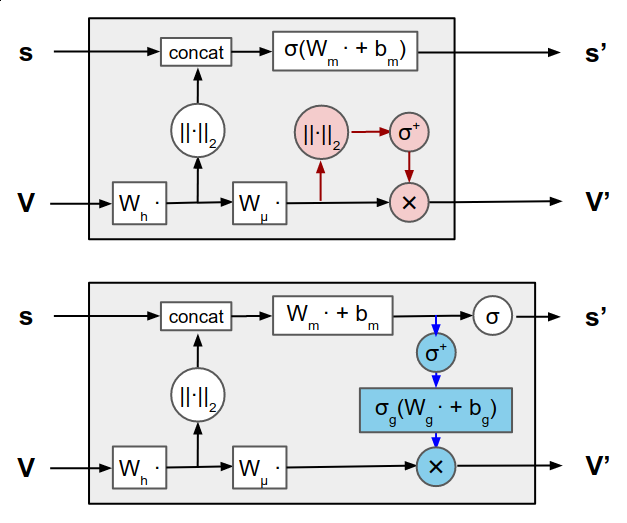}
    \caption{Schematic of the original geometric vector perceptron (GVP) as described in \citet{jing2021learning} (\textbf{top}) and the modified GVP presented in Algorithm~\ref{alg:gvp2} (\textbf{bottom}). The original vector nonlinearity (in red) has been replaced with vector gating (in blue), allowing information to propagate from the scalar channels to the vector channels. Circles denote row- or element-wise operations. The modified GVP is the core module in our equivariant GNN.}
    \label{fig:gvp}
\end{figure}

\emph{Graph neural networks} (GNNs) have enjoyed widespread use on macromolecular structure due to the natural representation of structures as graphs, with residues or atoms as nodes and edges drawn based on bonding or spatial proximity \citep{ingraham2019generative, baldassarre2020graphqa}. However, these networks generally indirectly encode the 3D geometry in terms of pairwise distances, angles, and other scalar features. Equivariant message-passing, proposed by \citet{jing2021learning} and more recently \citet{satorras2021en} and \citet{schuett2021equivariant}, seeks to instead incorporate the equivariant representations of ENNs  within the message-passing framework of GNNs. This presents an alternative, mathematically simpler approach to equivariance in lieu of the equivariant convolutions of ENNs, and also has the advantage of leveraging the relational reasoning of GNNs.

Here, we present a GNN with equivariant message-passing and demonstrate that it achieves strong results across a wide variety of tasks on macromolecular structures. Our architecture extends GVP-GNNs \cite{jing2021learning} and we evaluate its performance on ATOM3D \cite{townshend2020atom3d}, a comprehensive suite of tasks and datasets from structural biology. We also demonstrate that our architecture can leverage transfer learning---i.e., pre-training on a data-rich task to improve performance on a data-poor task. This is particularly desirable for machine learning on macromolecules, as structural data is often expensive and difficult to obtain.

\section{Equivariant Graph Neural Networks}

GVP-GNNs \citep{jing2021learning} are equivariant graph neural networks in which all node and edge embeddings are tuples $(\mathbf{s}, \mathbf{V})$ of scalar features $\mathbf{s} \in \mathbb{R}^n$ and geometric vector features $\mathbf{V} \in \mathbb{R}^{\nu \times 3}$. Message and update functions are parameterized by \emph{geometric vector perceptrons} (GVPs)---modules mapping between tuples $(\mathbf{s}, \mathbf{V})$ while preserving rotation equivariance. Here, we extend GVP-GNNs---originally designed for representation of protein structures at the amino acid residue level---to atomic-level structure representations. This enables the architecture to be applied to a larger variety of tasks, and motivates \emph{vector gating} as an adaptation for the change in input representation.

\paragraph{Vector gating} In the originally defined GVP, the vector outputs are functions of the vector inputs, but not the scalar inputs---i.e, vector features at all stages of message-passing are independent of residue identity or other scalar features. This is not a significant issue for residue-level structure graphs, in which nodes start with vector features in the form of residue orientations. However, in atomic-level structure graphs, individual atoms do not necessarily start with an orientation. We therefore introduce \textit{vector gating} to propagate information from the scalar channels into the vector channels (Algorithm~\ref{alg:gvp2}). In this operation, the scalar features $\text{s}_m$ are transformed and passed through a sigmoid activation $\sigma_g$ in order to ``gate'' the vector output $\textbf{V}'$, replacing the vector nonlinearity. Because $\text{s}_m$ is invariant and the gating is row-wise, the equivariance of $\textbf{V}'$ is unaffected.

\begin{algorithm}
\small
\caption{Geometric vector perceptron (with {\color{blue}vector gate})}\label{alg:gvp2}
\begin{algorithmic}
\STATE \textbf{Input}: Scalar and vector features $(\textbf{s},\textbf{V})$ $\in$ $\mathbb{R}^n \times \mathbb{R}^{\nu \times 3}$ .
\STATE \textbf{Output}: Scalar and vector features $(\textbf{s}',\textbf{V}')$ $\in$ $\mathbb{R}^m \times \mathbb{R}^{\mu \times 3}$ .
\STATE $h \gets \max \left(\nu, \mu\right)$ (or separately specified)

\STATE \textbf{GVP}: 
\STATE \ \ \ \ $\text{V}_h \gets \textbf{W}_h \textbf{V} \quad \in \mathbb{R}^{h \times 3} $ 
\STATE \ \ \ \ $\text{V}_\mu \gets \textbf{W}_\mu \text{V}_h \quad \in \mathbb{R}^{\mu \times 3} $ 
\STATE \ \ \ \ $\text{s}_h \gets \norm{\text{V}_h} \ (\text{row-wise}) \quad \in \mathbb{R}^{h}$ 
\STATE \ \ \ \ $\text{s}_{h+n} \gets \text{concat}\left(\text{s}_h, \textbf{s}\right) \quad \in \mathbb{R}^{h+n}$
\STATE \ \ \ \ $\text{s}_m \gets \textbf{W}_m \text{s}_{h+n} + \textbf{b}_m \quad \in \mathbb{R}^m$
\STATE \ \ \ \ $\textbf{s}' \gets \sigma(\text{s}_m) \quad \in \mathbb{R}^{m}$
\STATE \ \ \ \ $\textbf{V}' \gets {\color{blue} \sigma_g\left(\textbf{W}_g[\sigma^+(\text{s}_m)]+\textbf{b}_g\right)\odot \text{V}_\mu} \ (\text{row-wise}) \quad \in \mathbb{R}^{\mu \times 3}$ 
\STATE  \textbf{return} $(\textbf{s}',\textbf{V}')$  
\end{algorithmic}
\end{algorithm}

The modification enables the GVP to inherit the Universal Approximation Property of dense layers with respect to rotation- and reflection-\textit{equivariant} functions $F: \mathbb{R}^{\nu \times 3} \rightarrow \mathbb{R}^3$, in addition to the approximation property for \textit{invariant} functions shown by \citet{jing2021learning}.

\newtheorem*{theorem}{Theorem}
\begin{theorem}
    Let R describe an arbitrary rotation and/or reflection in $\mathbb{R}^3$ and $G_s$ be the vector outputs of a GVP defined with $n=0$, $\mu=6$, and sigmoidal $\sigma, \sigma^+$.
    For $\nu \ge 3$ let $\Omega^\nu \subset \mathbb{R}^{\nu \times 3}$ be the set of all $\mathbf{V} = \begin{bmatrix}\vvv_1, & \ldots, & \vvv_\nu\end{bmatrix}^T\in \mathbb{R}^{\nu \times 3}$ such that $\vvv_1, \vvv_2, \vvv_3$ are linearly independent and $0<||\vvv_i||_2 \le b$ for all $i$ and some finite $b>0$. Then for any continuous $F: \Omega^\nu \rightarrow \mathbb{R}^3$ such that $(F \circ R)(\mathbf{V}) = (R\circ F)(\mathbf{V})$ and for any $\epsilon>0$, there exists a form $f(\mathbf{V})=\mathbf{1}^TG_s(\mathbf{V})$ such that $|F(\mathbf{V})_i-f(\mathbf{V})_i|< 6bC\epsilon$ (with finite $C>0$) for all $i=1,2,3$ and all $\mathbf{V}\in\Omega^\nu$. 
\end{theorem}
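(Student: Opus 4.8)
The plan is to reduce the equivariant statement to the \emph{invariant} universal approximation property already established by \citet{jing2021learning}, using $\vvv_1,\vvv_2,\vvv_3$ as an orientation-carrying frame. First I would show that every continuous equivariant $F$ on $\Omega^\nu$ can be written as $F(\mathbf{V})=\sum_{j=1}^{3}\alpha_j(\mathbf{V})\,\vvv_j$ with continuous \emph{rotation/reflection-invariant} coefficients $\alpha_j$. Then I would route the vectors $\vvv_j$ through the linear vector path of the GVP ($\mathbf{W}_h$ then $\mathbf{W}_\mu$), arranging the six rows of $\mathbf{V}_\mu$ to be $\pm M\vvv_1,\pm M\vvv_2,\pm M\vvv_3$, and route the invariant coefficients $\alpha_j$ through the scalar/gating path, so that after row-wise gating and summation with $\mathbf{1}^T$ the channels recombine into $\sum_j\alpha_j\vvv_j=F$. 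The hypotheses $\mu=6$ and $n=0$ are precisely what this construction uses.

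For the decomposition: since $\vvv_1,\vvv_2,\vvv_3$ are linearly independent they form a basis of $\mathbb{R}^3$, so $F(\mathbf{V})=\sum_j\alpha_j(\mathbf{V})\vvv_j$ with unique coordinates $(\alpha_1,\alpha_2,\alpha_3)^T=[\vvv_1\,|\,\vvv_2\,|\,\vvv_3]^{-1}F(\mathbf{V})$, which is continuous on $\Omega^\nu$ because the matrix is invertible there. Applying any $R$, the identity $F(R\mathbf{V})=RF(\mathbf{V})$ together with uniqueness of coordinates in the basis $R\vvv_1,R\vvv_2,R\vvv_3$ forces $\alpha_j(R\mathbf{V})=\alpha_j(\mathbf{V})$; hence each $\alpha_j$ is a continuous invariant function, i.e.\ a continuous function of the Gram matrix $\mathbf{V}\mathbf{V}^T$.

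Next I would instantiate Algorithm~\ref{alg:gvp2}. Choose $h$ and $\mathbf{W}_h$ so that the rows of $\mathbf{V}_h=\mathbf{W}_h\mathbf{V}$ include every $\vvv_i$ and every pairwise sum $\vvv_i+\vvv_j$; then the row norms $\mathbf{s}_h$ determine all of $\mathbf{V}\mathbf{V}^T$ through $\langle\vvv_i,\vvv_j\rangle=\tfrac12(\|\vvv_i+\vvv_j\|^2-\|\vvv_i\|^2-\|\vvv_j\|^2)$, so each $\alpha_j$ is a continuous function of $\mathbf{s}_h$ on compact subsets of $\Omega^\nu$. With $n=0$ the gate equals $\sigma_g\big(\mathbf{W}_g\,\sigma^+(\mathbf{W}_m\mathbf{s}_h+\mathbf{b}_m)+\mathbf{b}_g\big)$ — a one-hidden-layer network in $\mathbf{s}_h$ with sigmoidal hidden units $\sigma^+$ — so by the classical universal approximation theorem for sigmoidal networks it can, for $m$ large, uniformly approximate on a compact set any continuous target valued in the range of $\sigma_g$. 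Taking $\mathbf{W}_\mu$ (composed with $\mathbf{W}_h$) so that $\mathbf{V}_\mu$ has rows $M\vvv_1,-M\vvv_1,M\vvv_2,-M\vvv_2,M\vvv_3,-M\vvv_3$ for a constant $M$ exceeding $\sup_{i,\mathbf{V}}|\alpha_i(\mathbf{V})|$, we get $f(\mathbf{V})=\mathbf{1}^TG_s(\mathbf{V})=\sum_{j=1}^{3}M\,(g_{2j-1}-g_{2j})\,\vvv_j$, and it suffices to fit $g_{2j-1}$ to $\sigma_g(\mathrm{logit}(\tfrac12(1+\alpha_j/M)))$ and $g_{2j}$ to $\sigma_g(\mathrm{logit}(\tfrac12(1-\alpha_j/M)))$, both continuous in $\mathbf{s}_h$ with values strictly inside $(0,1)$. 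Since $\|\vvv_j\|\le b$, a uniform gate error $\epsilon$ (times the Lipschitz constant of $\sigma_g$) propagates through the six channels to $|F(\mathbf{V})_i-f(\mathbf{V})_i|\le 6bC\epsilon$, with $C$ absorbing $M$ and that Lipschitz constant; equivariance of $f$ is immediate because the gate is invariant and acts row-wise, as already noted for the modified GVP.

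The main obstacle is that $\Omega^\nu$ is bounded but not closed: as $\vvv_1,\vvv_2,\vvv_3$ approach linear dependence (or some $\|\vvv_i\|\to 0$) the coefficients $\alpha_j=[\vvv_1|\vvv_2|\vvv_3]^{-1}F$ can blow up, whereas a sigmoidal gate forces $f$ to stay bounded by about $6bM$. So a genuinely uniform estimate over all of $\Omega^\nu$ needs the $\alpha_j$ bounded there — automatic, for instance, when $F$ extends continuously to $\overline{\Omega^\nu}$ — which in turn fixes the size of $M$ and hence of $C$; on any compact exhaustion of $\Omega^\nu$ the argument runs verbatim. The remaining points — that finitely many sums $\vvv_i+\vvv_j$ suffice to pin down $\mathbf{V}\mathbf{V}^T$, that the composed linear maps realizing the above exist at the stated dimensions, the continuity of $\alpha_j$ as a function of the Gram matrix, and the bookkeeping of $C$ — are routine.
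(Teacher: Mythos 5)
Your proposal is correct and follows essentially the same route as the paper's proof: write $F(\mathbf{V})=\sum_{j}\alpha_j(\mathbf{V})\mathbf{v}_j$ with rotation/reflection-invariant coefficients, use $\mathbf{W}_\mu\mathbf{W}_h$ to place $\pm$ (scaled) copies of $\mathbf{v}_1,\mathbf{v}_2,\mathbf{v}_3$ in the six vector channels, approximate the bounded, sign-split coefficients through the scalar/gating path via sigmoidal universal approximation, and add the six per-row errors of size $bC\epsilon$ to obtain the $6bC\epsilon$ bound. The differences are minor: you invoke the classical one-hidden-layer approximation theorem directly (with an explicit $\mathbf{W}_h$ recovering the Gram matrix) where the paper cites the invariant approximation result of \citet{jing2021learning}, you split $\alpha_j$ as $M\bigl(\tfrac12(1+\alpha_j/M)-\tfrac12(1-\alpha_j/M)\bigr)$ rather than into scaled positive/negative parts (which avoids applying $\sigma_g^{-1}$ at the endpoints of $[0,1]$), and you explicitly flag the finiteness of $C$ over the non-compact $\Omega^\nu$, a point the paper handles by simply stipulating finite $C$ in the theorem statement.
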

See the Appendix for a proof. As a corollary, a GVP with nonzero $n$---that is, with scalar inputs---can approximate similarly defined functions over the full input domain $\mathbb{R}^n \times \mathbb{R}^{\nu \times 3}$, corresponding to the propagation of information from scalar to vector channels. Using these modified GVPs, we build GVP-GNN with the same formulation of message-passing and node update as in \citet{jing2021learning}.

\section{ML on 3D Macromolecular Structure}

ATOM3D \citep{townshend2020atom3d} is a collection of eight tasks and datasets for learning on atomic-level 3D molecular structure. These tasks span several classes of molecules (proteins, RNAs, small molecules, and complexes) and problem formulations (regression and classification) encountered in  structural biology (Table~\ref{tab:atom3d}). \citet{townshend2020atom3d} established reference benchmarks for convolutional, graph, and equivariant neural networks---the three architectures most broadly applicable to molecular structure---on these tasks. We compare our modified equivariant GVP-GNN to these reference architectures.

The tasks in ATOM3D span several orders of magnitude in both dataset size and structure size (a rough proxy for task complexity), and therefore range from data-rich (SMP, RES) to data-poor (MSP, LEP). However, these tasks share a common underlying representation and problem domain, presenting an opportunity for \emph{transfer learning} to improve performance on data-poor tasks. We therefore investigate if leveraging the learned parameters from the first two layers of the trained model on SMP (target $\mu$) and RES can improve over a non-pretrained model on the other tasks.

\begin{table*}[ht!]
  \caption{Tasks in ATOM3D \citep{townshend2020atom3d} encompass a broad range of structure types, problem formulations, dataset sizes, and structure sizes. PPI, RES, MSP, and LEP are classification tasks; the others are regression tasks. Further, PPI, MSP, and LEP have paired inputs, so methods for these tasks use weight-tying between two independent forward passes. Dataset size is the number of training examples, and structure size is the geometric mean of the number of nodes in training examples. In paired tasks, each pair is counted as two examples. Where necessary, we use dataloader conventions from \citep{townshend2020atom3d}.}
  \label{tab:atom3d}
  \centering
  \small
  \begin{tabular}{ccccc} 
    \toprule
    Name & Input Structure(s) & Prediction & Dataset Size & Structure Size\\
   \midrule
   \makecell{Small Molecule\\Properties (SMP)} & Small molecule & Physiochemical property & $104 \times 10^3$ & 18 \\
   \midrule
   \makecell{Protein-Protein\\Interface (PPI)} & \makecell{Two protein chains\\+ residue from each} & \makecell{Whether the residues come into contact\\when the parent chains interact} & $2.1 \times 10^6$ & 999 \\
   \midrule
   \makecell{Residue Identity\\(RES)} & \makecell{Environment around\\a masked residue} & \makecell{Masked residue identity} & $1.0 \times 10^6$ & 574\\
   \midrule
   \makecell{Mutation Stability\\Prediction (MSP)} & \makecell{Protein complex + same\\complex with mutation} & \makecell{Whether the mutation\\stabilizes the complex} & 5728 & 3236\\
   \midrule
   \makecell{Ligand Binding\\Affinity (LBA)} & \makecell{Protein-ligand complex} & \makecell{Negative log affinity ($\text{p}K_\text{d}$)} & 3507 & 382\\
   \midrule
   \makecell{Ligand Efficacy\\Prediction (LEP)} & \makecell{Protein-ligand complex in\\active+inactive conformations} & \makecell{Whether the ligand\\activates the protein} & 608 & 3001\\
   \midrule
   \makecell{Protein Structure\\Ranking (PSR)} & \makecell{Protein} & \makecell{GDT-TS relative to\\native structure} & $25.4 \times 10^3$ & 1384 \\
   \midrule
   \makecell{RNA Structure\\Ranking (RSR)} & \makecell{RNA} & \makecell{RMSD relative to\\native structure} & $12.5 \times 10^3$ & 2161\\
   \bottomrule
  \end{tabular}
\end{table*}

\begin{table*}[ht!]
  \centering
  \caption{Comparison of the GVP-GNN with the CNN, GNN, and ENN reference architectures on ATOM3D. The GVP-GNN is the best architecture on three out of eight tasks, more than any other method. 
  For each metric, the best model is in bold. SMP metrics are MAE. Metrics are labeled with $\uparrow$/$\downarrow$ if higher/lower is better, respectively. Results are mean $\pm$ S.D. over three training runs.}
  \small
    \begin{tabular}{llcccc}
    \toprule
    & & \multicolumn{3}{c}{ATOM3D Reference} \\
    \cmidrule(r){3-5}
    Task & Metric & CNN & GNN & ENN & GVP-GNN \\
    \midrule
    SMP & $\mu$ [D] $\downarrow$ & $0.754 \pm 0.009$ & $0.501 \pm 0.002$ & ${0.052 \pm 0.007}$ & $\mathbf{0.049 \pm 0.002}$ \\
    & $\eps_\text{gap}$ [eV] $\downarrow$ & $0.580 \pm 0.004$ & $0.137 \pm 0.002$ & ${0.095 \pm 0.021}$ & $\mathbf{0.065 \pm 0.001}$ \\
    & $U_0^\text{at}$ [eV] $\downarrow$ & $3.862 \pm 0.594$ & $1.424 \pm 0.211$ & $\mathbf{0.025 \pm 0.001}$ & $0.143 \pm 0.007$ \\
    \midrule
    PPI & AUROC $\uparrow$ & ${0.844 \pm 0.002}$  & $0.669 \pm 0.001$ & --- & $\mathbf{0.866 \pm 0.004}$ \\ \midrule
    RES & accuracy $\uparrow$ & ${0.451 \pm 0.002}$  & $0.082 \pm 0.002$ & $0.072 \pm 0.005$ & $\mathbf{0.527 \pm 0.003}$ \\ \midrule
    MSP & AUROC $\uparrow$ & $0.574 \pm 0.005$  & $0.621 \pm 0.009$ & $0.574 \pm 0.040$ & $\mathbf{0.680 \pm 0.015}$ \\ \midrule
    LBA & RMSE $\downarrow$ & $\mathbf{1.416 \pm 0.021}$  & $1.570 \pm 0.025$ & ${1.568 \pm 0.012}$ & $1.594 \pm 0.073$\\
    \midrule
    LEP & AUROC $\uparrow$ & ${0.589 \pm 0.020}$ & $\mathbf{0.740 \pm 0.010}$ & $0.663 \pm 0.100$ & $0.628 \pm 0.055$\\ \midrule
    PSR & mean $R_S$ $\uparrow$ & $0.431 \pm 0.013$ & $\mathbf{0.515 \pm 0.010}$ & --- & ${0.511 \pm 0.010}$ \\
    & global $R_S$ $\uparrow$ & $0.789 \pm 0.017$ & $0.755 \pm 0.004$ & --- & $\mathbf{0.845 \pm 0.008}$ \\
    \midrule
     RSR & mean $R_S$ $\uparrow$ & $\mathbf{0.264 \pm 0.046}$ & $0.234 \pm 0.006$ & --- & $0.211 \pm 0.142$ \\
    & global $R_S$ $\uparrow$ & ${0.372 \pm 0.027}$ & $\mathbf{0.512 \pm 0.049}$ & --- & $0.330 \pm 0.054$ \\
    \bottomrule
  \end{tabular}
  \label{tab:atom3d2}
\end{table*}

\begin{table*}[ht!]
    \caption{Transfer learning improves performance of GVP-GNNs on a number of ATOM3D tasks (PPI, MSP, PSR). For each task, we compare models pretrained on SMP and RES with the original models. Pretrained models that outperform the original models are shown in bold. The original model is shown in bold if neither pretrained model outperforms it. $p$-values are shown for a one-tailed $t$-test between the pretrained model and original model, and bolded if $p\le 0.05$. Metrics are labeled with $\uparrow$/$\downarrow$ if higher/lower is better. Results are mean $\pm$ S.D. over three training runs.}
    \centering
    \small
    \begin{tabular}{llccccccc}
    \toprule
    Task & Metric & No Pretraining & & SMP Pretraining & $p$ & & RES Pretraining & $p$ \\
    \midrule
    PPI & AUROC $\uparrow$ & $0.866 \pm 0.004$ & & $\mathbf{0.874 \pm 0.001}$ & \textbf{0.04} & & --- & --- \\ \midrule
    RES & accuracy $\uparrow$ & ${0.527 \pm 0.003}$ & & $\mathbf{0.531 \pm 0.001}$ & 0.07 & & --- & --- \\ \midrule
    MSP & AUROC $\uparrow$ & $0.680 \pm 0.015$ & & $\mathbf{0.711 \pm 0.007}$ & \textbf{0.03} & & $\mathbf{0.709 \pm 0.014}$ & \textbf{0.04} \\ \midrule
    LBA & RMSE $\downarrow$ & $\mathbf{1.594 \pm 0.073}$ & & $1.649 \pm 0.118$ & 0.73 & & $1.676 \pm 0.162$ & 0.76\\ \midrule
    LEP & AUROC $\uparrow$ & $\mathbf{0.628 \pm 0.055}$ & & $0.567 \pm 0.210$ & 0.67 & & $0.466 \pm 0.110$ & 0.95 \\ \midrule
    PSR & mean $R_S$ $\uparrow$ & $0.511 \pm 0.010$ & & $\mathbf{0.515 \pm 0.007}$ & 0.31 & & $\mathbf{0.515 \pm 0.008}$ & 0.35\\
        & global $R_S$ $\uparrow$ & $0.845 \pm 0.008$ & & $\mathbf{0.848 \pm 0.006}$ & 0.29 & & $\mathbf{0.862 \pm 0.006}$ & \textbf{0.02}\\ 
    \midrule
    RSR & mean $R_S$ $\uparrow$ & $\mathbf{0.211 \pm 0.142}$ & & $0.194 \pm 0.161$ & 0.55 & & $0.156 \pm 0.155$ & 0.66 \\
        & global $R_S$ $\uparrow$ & $\mathbf{0.330 \pm 0.054}$ & & $0.059 \pm 0.303$ & 0.87 & & $0.308 \pm 0.072$ & 0.65\\ 
    \bottomrule
    \end{tabular}
    \label{tab:transfer}
\end{table*}

\paragraph{Architecture}
We represent a macromolecular structure as a graph $\mathcal{G} = (\mathcal{V}, \mathcal{E})$ where each node $\mathfrak{v}_i \in \mathcal{V}$ corresponds to an atom and is featurized by a one-hot encoding of its element type. We draw edges $\mathcal{E}=\{\mathbf{e}_{j\rightarrow i}\}_{i\neq j}$ for all $i, j$ whose pairwise distance is less than 4.5 \r{A}. Each edge is featurized with a unit vector in the direction of the edge and a Gaussian RBF encoding of its length. We use hidden embeddings with 16 vector and 100 scalar channels. For all tasks, we use a GVP-GNN with five layers, followed by a mean pool (unless otherwise noted) and two dense layers. For all GVPs, we use $\sigma=\text{ReLU}$ and $\sigma^+ = \text{id}$.

We do not encode any bond or bond type information, nor do we distinguish atoms from different molecules in the same structure. This is in order to standardize the architecture across tasks as much as possible, but may handicap our method relative to those that do use this information.

In all tasks except SMP and LBA, we omit the hydrogen atoms. In the pairwise tasks PPI, LEP, and MSP, we perform independent forward passes and concatenate the outputs. In residue-specific tasks, we pick out the output embedding of the alpha carbon (PPI, RES), or mean pool over the residue (MSP). These conventions are from \citet{townshend2020atom3d}.

\section{Results}

We compare the vector-gated GVP-GNN with the reference six-layer CNN, five-layer GNN, and four-layer ENN architectures on the ATOM3D test sets (Table~\ref{tab:atom3d2}). The GVP-GNN achieves generally strong performance: it is the top method on three tasks (PPI, RES, MSP)---more than any other method---and tied for first on two (SMP, PSR). 
However, in a head-to-head comparison with the standard GNN, the GVP-GNN scores higher on only 8 out of 12 metrics, suggesting that equivariant, vector-valued representations may not be equally useful for all datasets or metrics.

The reference ENN is a Cormorant neural network---a point cloud network with spherical harmonic convolutions \citep{anderson2019cormorant}. GVP-GNN is on par with this ENN, scoring better on 4 out of 7 metrics. (The ENN was not trained on PPI, PSR, and RSR.) However, the ENN uses equivariant representations up to $L=3$---comparable to 3rd order geometric tensors---while the GVP-GNN only uses geometric \textit{vectors}, or 1st order tensors. Thus, equivariant message-passing with lower order tensors is competitive with higher-order ENNs on molecular tasks.

Compared with the GNN and ENN collectively, the GVP-GNN falls short on the tasks involving ligand-protein complexes (LBA, LEP). We postulate that this may be because our general atomic-level input features omit information that explicitly distinguishes ligand atoms from protein atoms.

\paragraph{Transfer learning}

GVP-GNNs pretrained on data-rich tasks (SMP, RES) are able to improve performance on a number of downstream tasks (Table~\ref{tab:transfer}). Specifically, the SMP-pretrained model improves performance on PPI, RES, MSP, and PSR, and the RES-pretrained model improves performance on MSP and PSR. (The RES model was not fine-tuned on PPI because the datasets are of similar size.) Many of these improvements are statistically significant at the $\alpha=0.05$ level. To the best of our knowledge, this is the first time transfer learning has been successfully demonstrated for machine learning methods operating on 3D representations of macromolecular structures.

We observe that even in cases where pretraining does not improve the performance, it still may expedite training. Figure~\ref{fig:curves} compares the learning curves for the original and SMP-pretrained models on PPI, RES, and LBA. On all three tasks, the learned weights from SMP appear to help on the new dataset, as learning is expedited by several epochs. However, while for PPI and RES this head start leads to improved or comparable performance, for LBA it increases overfitting and hurts performance.

\begin{figure}
    \centering
    \includegraphics[width=\linewidth]{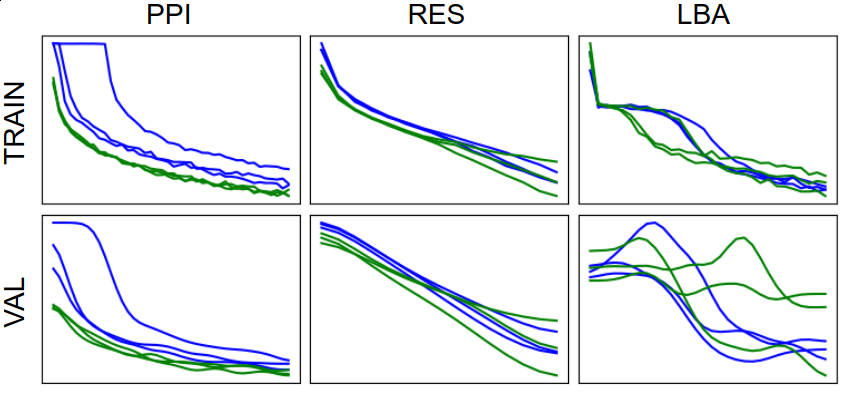}
    \caption{Learning curves (loss vs. epoch) for the original (blue) and SMP-pretrained (green) models on PPI, RES, LBA, all illustrating expedited training but with different final results. The losses have been Gaussian smoothed with $\sigma=2$ epochs.}
    \label{fig:curves}
\end{figure}

\section{Conclusion}

We have demonstrated the systematic application of equivariant graph neural networks to macromolecular structures. Our architecture extends GVP-GNN to atomic-level structures, endows it with additional universal approximation properties, and achieves strong performance across a wide range of structural biology tasks. We also show that leveraging pretrained equivariant representations can boost performance on downstream tasks. These results suggest that equivariant message-passing is a powerful and versatile paradigm for machine learning on molecules.

\newpage
{\small
\bibliography{bibliography}

\begin{thebibliography}{14}
\providecommand{\natexlab}[1]{#1}
\providecommand{\url}[1]{\texttt{#1}}
\expandafter\ifx\csname urlstyle\endcsname\relax
  \providecommand{\doi}[1]{doi: #1}\else
  \providecommand{\doi}{doi: \begingroup \urlstyle{rm}\Url}\fi

\bibitem[Anderson et~al.(2019)Anderson, Hy, and Kondor]{anderson2019cormorant}
Anderson, B., Hy, T.-S., and Kondor, R.
\newblock Cormorant: Covariant molecular neural networks.
\newblock \emph{arXiv preprint arXiv:1906.04015}, 2019.

\bibitem[Baldassarre et~al.(2020)Baldassarre, Menéndez~Hurtado, Elofsson, and
  Azizpour]{baldassarre2020graphqa}
Baldassarre, F., Menéndez~Hurtado, D., Elofsson, A., and Azizpour, H.
\newblock {GraphQA: protein model quality assessment using graph convolutional
  networks}.
\newblock \emph{Bioinformatics}, 2020.

\bibitem[Batzner et~al.(2021)Batzner, Smidt, Sun, Mailoa, Kornbluth, Molinari,
  and Kozinsky]{batzner2021se}
Batzner, S., Smidt, T.~E., Sun, L., Mailoa, J.~P., Kornbluth, M., Molinari, N.,
  and Kozinsky, B.
\newblock Se (3)-equivariant graph neural networks for data-efficient and
  accurate interatomic potentials.
\newblock \emph{arXiv preprint arXiv:2101.03164}, 2021.

\bibitem[Eismann et~al.(2020{\natexlab{a}})Eismann, {\GG{a}}Townshend, Thomas,
  Jagota, Jing, and Dror]{eismann2020one}
Eismann, S., {\GG{a}}Townshend, R.~J., Thomas, N., Jagota, M., Jing, B., and
  Dror, R.~O.
\newblock Hierarchical, rotation-equivariant neural networks to select
  structural models of protein complexes.
\newblock \emph{Proteins: Structure, Function, and Bioinformatics},
  2020{\natexlab{a}}.
\newblock \doi{10.1002/prot.26033}.

\bibitem[Eismann et~al.(2020{\natexlab{b}})Eismann, {\GG{b}}Suriana, Jing,
  Townshend, and Dror]{eismann2020two}
Eismann, S., {\GG{b}}Suriana, P., Jing, B., Townshend, R.~J., and Dror, R.~O.
\newblock Protein model quality assessment using rotation-equivariant,
  hierarchical neural networks.
\newblock \emph{arXiv preprint arXiv:2011.13557}, 2020{\natexlab{b}}.

\bibitem[Fuchs et~al.(2020)Fuchs, Worrall, Fischer, and Welling]{fuchs2020se3}
Fuchs, F., Worrall, D., Fischer, V., and Welling, M.
\newblock S{E}(3)-transformers: 3{D} roto-translation equivariant attention
  networks.
\newblock In \emph{Advances in Neural Information Processing Systems},
  volume~33, pp.\  1970--1981, 2020.

\bibitem[Ingraham et~al.(2019)Ingraham, Garg, Barzilay, and
  Jaakkola]{ingraham2019generative}
Ingraham, J., Garg, V., Barzilay, R., and Jaakkola, T.
\newblock Generative models for graph-based protein design.
\newblock In \emph{Advances in Neural Information Processing Systems}, pp.\
  15794--15805, 2019.

\bibitem[Jing et~al.(2021)Jing, Eismann, Suriana, Townshend, and
  Dror]{jing2021learning}
Jing, B., Eismann, S., Suriana, P., Townshend, R. J.~L., and Dror, R.
\newblock Learning from protein structure with geometric vector perceptrons.
\newblock In \emph{International Conference on Learning Representations}, 2021.

\bibitem[Kondor(2018)]{kondor2018n}
Kondor, R.
\newblock N-body networks: A covariant hierarchical neural network architecture
  for learning atomic potentials.
\newblock \emph{arXiv preprint arXiv:1803.01588}, 2018.

\bibitem[Laine et~al.(2021)Laine, Eismann, Elofsson, and
  Grudinin]{laine2021protein}
Laine, E., Eismann, S., Elofsson, A., and Grudinin, S.
\newblock Protein sequence-to-structure learning: Is this the end(-to-end
  revolution)?
\newblock \emph{arXiv preprint arXiv:2105.07407}, 2021.

\bibitem[Satorras et~al.(2021)Satorras, Hoogeboom, and Welling]{satorras2021en}
Satorras, V.~G., Hoogeboom, E., and Welling, M.
\newblock E(n) equivariant graph neural networks.
\newblock \emph{arXiv preprint arXiv:2102.09844}, 2021.

\bibitem[Schütt et~al.(2021)Schütt, Unke, and
  Gastegger]{schuett2021equivariant}
Schütt, K.~T., Unke, O.~T., and Gastegger, M.
\newblock Equivariant message passing for the prediction of tensorial
  properties and molecular spectra.
\newblock \emph{arXiv preprint arXiv:2102.03150}, 2021.

\bibitem[Thomas et~al.(2018)Thomas, Smidt, Kearnes, Yang, Li, Kohlhoff, and
  Riley]{thomas2018tensor}
Thomas, N., Smidt, T., Kearnes, S., Yang, L., Li, L., Kohlhoff, K., and Riley,
  P.
\newblock Tensor field networks: Rotation-and translation-equivariant neural
  networks for 3{D} point clouds.
\newblock \emph{arXiv preprint arXiv:1802.08219}, 2018.

\bibitem[Townshend et~al.(2020)Townshend, V{\"o}gele, Suriana, Derry, Powers,
  Laloudakis, Balachandar, Anderson, Eismann, Kondor,
  et~al.]{townshend2020atom3d}
Townshend, R.~J., V{\"o}gele, M., Suriana, P., Derry, A., Powers, A.,
  Laloudakis, Y., Balachandar, S., Anderson, B., Eismann, S., Kondor, R.,
  et~al.
\newblock {ATOM3D}: Tasks on molecules in three dimensions.
\newblock \emph{arXiv preprint arXiv:2012.04035}, 2020.

\end{thebibliography}
\bibliographystyle{icml2021}
}
\newpage
\section*{Appendix}

\begin{theorem}
    Let R describe an arbitrary rotation and/or reflection in $\mathbb{R}^3$ and $G_s$ be the vector outputs of a GVP defined with $n=0$, $\mu=6$, and sigmoidal $\sigma, \sigma^+$.
    For $\nu \ge 3$ let $\Omega^\nu \subset \mathbb{R}^{\nu \times 3}$ be the set of all $\mathbf{V} = \begin{bmatrix}\vvv_1, & \ldots, & \vvv_\nu\end{bmatrix}^T\in \mathbb{R}^{\nu \times 3}$ such that $\vvv_1, \vvv_2, \vvv_3$ are linearly independent and $0<||\vvv_i||_2 \le b$ for all $i$ and some finite $b>0$.    Then for any continuous $F: \Omega^\nu \rightarrow \mathbb{R}^3$ such that $(F \circ R)(\mathbf{V}) = (R\circ F)(\mathbf{V})$ and for any $\epsilon>0$, there exists a form $f(\mathbf{V})=\mathbf{1}^TG_s(\mathbf{V})$ such that $|F(\mathbf{V})_i-f(\mathbf{V})_i|< 6bC\epsilon$ (with finite $C>0$) for all $i=1,2,3$ and all $\mathbf{V}\in\Omega^\nu$. 
\end{theorem}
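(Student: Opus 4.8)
The plan is to exploit the explicit form of the vector‑gated GVP. When $n=0$ its vector output is $G_s(\mathbf{V}) = \operatorname{diag}\!\big(\bm{\gamma}(\mathbf{V})\big)\,(\mathbf{W}_\mu\mathbf{W}_h)\mathbf{V}$, where the gate $\bm{\gamma}(\mathbf{V}) = \sigma_g\big(\mathbf{W}_g\,\sigma^+(\mathbf{W}_m\,\norm{\mathbf{W}_h\mathbf{V}} + \mathbf{b}_m) + \mathbf{b}_g\big)\in(0,1)^6$ (norms taken row‑wise) depends on $\mathbf{V}$ only through the invariant row‑norms of $\mathbf{W}_h\mathbf{V}$ and is therefore $R$‑invariant, while $(\mathbf{W}_\mu\mathbf{W}_h)\mathbf{V}$ is a fixed linear, hence equivariant, function of the input rows. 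Summing the six channels, $f(\mathbf{V}) = \mathbf{1}^TG_s(\mathbf{V}) = \sum_{k=1}^{6}\gamma_k(\mathbf{V})\,\mathbf{a}_k(\mathbf{V})$ with each $\mathbf{a}_k(\mathbf{V})$ a fixed linear combination of $\vvv_1,\dots,\vvv_\nu$. The proof then has two parts: put the target $F$ into the same shape, and match the two shapes by a universal‑approximation argument applied to the invariant gates.

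First I would write $F$ in a canonical equivariant form. On $\Omega^\nu$ the triple $\vvv_1,\vvv_2,\vvv_3$ is a basis of $\mathbb{R}^3$, so $F(\mathbf{V}) = \sum_{i=1}^{3}c_i(\mathbf{V})\,\vvv_i$ for unique coefficients; Cramer's rule expresses $c_i$ as a ratio of $3\times 3$ determinants in the (bounded) entries of $\mathbf{V}$ and $F(\mathbf{V})$ over $\det[\vvv_1\,\vvv_2\,\vvv_3]$, so each $c_i$ is continuous. Comparing $F(R\mathbf{V}) = RF(\mathbf{V}) = \sum_i c_i(\mathbf{V})R\vvv_i$ against $F(R\mathbf{V}) = \sum_i c_i(R\mathbf{V})R\vvv_i$, and using that $R\vvv_1,R\vvv_2,R\vvv_3$ are again independent, shows each $c_i$ is $R$‑invariant. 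I would then restrict attention to a compact $K\subset\Omega^\nu$ on which $\det[\vvv_1\,\vvv_2\,\vvv_3]$ is bounded away from $0$ — equivalently, where $F$ and hence the $c_i$ are bounded — since this is exactly what makes a finite constant $C$ possible; I read this as implicit in the statement.

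Next I would construct the GVP. Take $h$ large enough (it may be ``separately specified'') that the rows of $\mathbf{W}_h\mathbf{V}$ include $\vvv_1,\vvv_2,\vvv_3$ together with all polarization combinations $\vvv_i+\vvv_j$; then the vector of row‑norms determines the full Gram matrix via $\langle\vvv_i,\vvv_j\rangle = \tfrac12(\norm{\vvv_i+\vvv_j}^2 - \norm{\vvv_i}^2 - \norm{\vvv_j}^2)$, hence determines $\mathbf{V}$ up to $O(3)$ — it is a \emph{complete} invariant. Fix $M > \sup_K\max_i|c_i|$ and choose $\mathbf{W}_\mu$ so that the six rows of $(\mathbf{W}_\mu\mathbf{W}_h)\mathbf{V}$ are $M\vvv_1,M\vvv_2,M\vvv_3,-M\vvv_1,-M\vvv_2,-M\vvv_3$; then $f(\mathbf{V}) = \sum_{i=1}^{3}M\big(\gamma_i(\mathbf{V})-\gamma_{i+3}(\mathbf{V})\big)\vvv_i$, so it suffices to make $M(\gamma_i-\gamma_{i+3})\approx c_i$. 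Since $\tfrac12\pm\tfrac{c_i(\mathbf{V})}{2M}\in(0,1)$ lies in the range of $\sigma_g$, the map $\mathbf{V}\mapsto\sigma_g^{-1}\!\big(\tfrac12\pm\tfrac{c_i(\mathbf{V})}{2M}\big)$ is a continuous invariant function; being a continuous function of the complete invariant $\norm{\mathbf{W}_h\mathbf{V}}$, it can be approximated uniformly on $K$ to within $\epsilon$ by a one‑hidden‑layer network $\mathbf{W}_g\,\sigma^+(\mathbf{W}_m\,\norm{\mathbf{W}_h\mathbf{V}} + \mathbf{b}_m) + \mathbf{b}_g$ with sigmoidal activation (the classical universal approximation theorem, after a Tietze extension of the factored function to a cube). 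Composing with the Lipschitz map $\sigma_g$ (constant $L$) gives $|\gamma_i - (\tfrac12+\tfrac{c_i}{2M})|\le L\epsilon$, and likewise for $\gamma_{i+3}$, so $|M(\gamma_i-\gamma_{i+3}) - c_i|\le 2ML\epsilon$.

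Finally, accumulating the error, for each coordinate $i$,
\[
  |F(\mathbf{V})_i - f(\mathbf{V})_i| \;\le\; \sum_{j=1}^{3}\big|M(\gamma_j-\gamma_{j+3}) - c_j\big|\,\norm{\vvv_j} \;\le\; 6\,b\,M L\,\epsilon ,
\]
the claimed bound with $C = ML$, finite once $F$ is bounded on $K$. The step I expect to be the real obstacle is precisely this boundedness point: a single GVP with fixed weights produces only vector outputs of bounded norm built from coefficients in $(0,1)$, so it cannot track equivariant $F$ whose coefficients $c_i$ blow up as $\vvv_1,\vvv_2,\vvv_3$ approach linear dependence, and pinning down a finite $C$ forces one to work on a uniformly nondegenerate compact subset of $\Omega^\nu$. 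Everything else — the canonical form, invariance of the $c_i$, completeness of the norm features, and the universal‑approximation step — is routine once the scaling $M$ is fixed.
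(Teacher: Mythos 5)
Your proposal follows essentially the same route as the paper's proof: expand $F(\mathbf{V})=\sum_{i=1}^{3}c_i(\mathbf{V})\,\vvv_i$ in the linearly independent rows, observe that the $c_i$ are $R$-invariant, choose $\mathbf{W}_\mu\mathbf{W}_h$ so the six vector channels are scaled copies of $\pm\vvv_1,\pm\vvv_2,\pm\vvv_3$, approximate the (bounded, rescaled) coefficients through the invariant scalar pathway built on the row norms of $\mathbf{W}_h\mathbf{V}$, and sum the per-row errors to get the $6bC\epsilon$ bound. Your only substantive deviations are refinements of the same construction: you encode $c_i$ as a difference of gates targeting $\tfrac12\pm\tfrac{c_i}{2M}$ rather than the paper's split $c_i^\pm$ with $\sigma_g^{-1}(c_i^\pm)$ (which is ill-defined where $c_i^\pm$ attains $0$ or $1$), and you make explicit the compactness restriction needed for the finiteness of $C$, which the paper leaves implicit in its $\max_{\mathbf{V}\in\Omega^\nu}|c_i(\mathbf{V})|$.
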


\begin{proof}
We start by writing out the GVP $G_s$ in full as $G_s(\textbf{V}) = \sigma_g\left(\textbf{W}_g\text{s}^+ +\textbf{b}_g\right) \odot \textbf{W}_\mu\textbf{W}_h\textbf{V}$, where $\text{s}^+ = \sigma^+\left(\textbf{W}_m\norm{\textbf{W}_h\textbf{V}} + \textbf{b}_m\right)$ and $\norm{\cdot}$ is a row-wise norm. The idea is to show that $\textbf{W}_\mu\textbf{W}_h$ can extract three linearly independent vectors (and their negatives), and that the vector gate can construct the coefficients on these vectors so that they sum to $F(\textbf{V})$.

First, because $\textbf{v}_1, \textbf{v}_2, \textbf{v}_3$ are linearly independent, we can write $F(\textbf{V}) = \sum_{i=1}^3 c_i\textbf{v}_i$ for some coefficients $c_i$ dependent on $\textbf{V}$. That is, $c_i$ is a function $c_i(\textbf{V})$. Let $\displaystyle C = \max_{i=1,2,3} \max_{\textbf{V} \in \Omega^\nu} |c_i(\textbf{V})|$. Now define $c_i^+ = \max(c_i, 0)/C$ and $c_i^- = -\min(c_i, 0)/C$ such that $c_i = C(c_i^+ - c_i^-)$.  These functions $c_i^+$ and $c_i^-$ are bounded between 0 and 1, so we can then define $\tilde{c}^+_i = \sigma_g^{-1}(c_i^+)$ and correspondingly for $\tilde{c}^-_i$ These functions must be invariant under $R$, for otherwise $F$ would not be equivariant under $R$.

Next, let $\mathbf{W}_h$ be parameterized as described in \citet{jing2021learning}. Using the earlier result, there exists a form $\mathbf{w}_{i\pm}^T\sigma^+\left(\textbf{W}_{m,i\pm}\norm{\textbf{W}_h\textbf{V}} + \textbf{b}_m\right) \in \mathbb{R}$ parameterized by $\mathbf{w}_{i\pm}, \mathbf{W}_{m,i\pm}$ that  $\epsilon$-approximates each $\tilde{c}_i^\pm$. Then letting $\mathbf{W}_m = \begin{bmatrix} \mathbf{W}_{m,1+}^T & \mathbf{W}_{m,1-}^T & \cdots \end{bmatrix}$ and letting $\mathbf{W}_g$ have the $\textbf{w}_{i\pm}$ on the diagonals,  we see that the entries of $\mathbf{W}_g\text{s}^+$ will $\epsilon$-approximate the $\tilde{c}_i^\pm$. Now let $\textbf{b}_g = \textbf{0}$ such that the vector gate is $\sigma_g\left(\textbf{W}_g\text{s}^+\right) \in \mathbb{R}^6$ where $\text{s}^+$ is as defined above. Because $\sigma_g$ has less than unit derivative everywhere, we can see that the entries of $\sigma_g\left(\textbf{W}_g\text{s}^+\right)$ will $\epsilon$-approximate the bounded functions $c_i^\pm$.

Finally, let $\textbf{W}_\mu$ be parameterized such that $\textbf{W}_\mu\textbf{W}_h= C\begin{bmatrix} I_3 & \ldots \\ -I_3 & \ldots \end{bmatrix}$. (It is straightforward to do this given the $\textbf{W}_h$ described above.) Then  $\textbf{W}_\mu\textbf{W}_h\textbf{V} = C\begin{bmatrix} \textbf{v}_1, \textbf{v}_2, \textbf{v}_3, -\textbf{v}_1, -\textbf{v}_2, -\textbf{v}_3\end{bmatrix}^T \in \mathbb{R}^{6\times 3}$; call this matrix $\textbf{V}_{123}$. We can now see that the rows of $G_s(\textbf{V}) = \sigma_g(\textbf{W}_g\text{s}^+) \odot \textbf{V}_{123}$ will approximate $\pm Cc_i^\pm\textbf{v}_i$, and the error bound in each coordinate is $Cb\epsilon$, since no coordinate of $\textbf{v}_i$ is greater than $b$. Then the vector $f(\textbf{V}) = \textbf{1}^TG_s(\textbf{V})$ will approximate $\sum_{i=1}^3C(c_i^+ - c_i^-)\textbf{v}_i = \sum_{i=1}^3c_i\textbf{v}_i = F(\textbf{V})$, and the error bound in each coordinate is at most the sum of the six row error bounds.
\end{proof}

\end{document}